\newtheorem{proposition}{Proposition}
\newtheorem{theorem}{Theorem}
\newtheorem{lemma}{Lemma}
\newtheorem{assumption}{Assumption}
\def\br#1{{#1}}
\def\bl#1{{#1}}
\title{\LARGE \bf
Distributed TD(0) with Almost No Communication
}
\author{Rui Liu$^{1}$ and Alex Olshevsky$^{2}$% <-this % stops a space
\thanks{$^{1}$Rui Liu is with the Division of Systems Engineering, Boston University, Boston, MA, USA
        {\tt\small rliu@bu.edu}}%
\thanks{$^{2}$Alex Olshevsky is with the Department of ECE and Division of Systems Engineering, Boston University, Boston, MA, USA
        {\tt\small alexols@u.edu}}%
}
\begin{document}

\maketitle
\thispagestyle{empty}
\pagestyle{empty}

%%%%%%%%%%%%%%%%%%%%%%%%%%%%%%%%%%%%%%%%%%%%%%%%%%%%%%%%%%%%%%%%%%%%%%%%%%%%%%%%
\begin{abstract}

We provide a new non-asymptotic analysis of distributed temporal difference learning with linear function approximation. Our approach relies on ``one-shot averaging,'' where $N$ agents run  identical local copies of the TD(0) method and average the outcomes only once at the very end. We demonstrate a version of the linear time speedup phenomenon, where the convergence time of the distributed process is a factor of $N$ faster than the convergence time of TD(0). This is the first result proving benefits from parallelism for temporal difference methods.

\end{abstract}

%%%%%%%%%%%%%%%%%%%%%%%%%%%%%%%%%%%%%%%%%%%%%%%%%%%%%%%%%%%%%%%%%%%%%%%%%%%%%%%%
\section{Introduction}

Recent years have seen reinforcement learning used in a variety of multi-agent systems. However, a rigorous understanding of how standard methods in reinforcement learning perform in a multi-agent setting with limited communication is only beginning to be available. 

One of the most fundamental problems in reinforcement learning is policy evaluation, and one of the most basic policy evaluation algorithms is temporal difference (TD) learning, originally proposed in \cite{sutton1988learning}. TD learning works by updating a value function from differences in predictions over a succession of steps in the underlying Markov Decision Process (MDP). 

Developments in the field of multi-agent reinforcement learning have led to an increased interest in decentralizing TD methods, which is the subject of this paper. We will consider a simple model where $N$ agents all have access to their own copy of the same MDP. Naturally, the agents can simply ignore each other and run any policy evaluation method without communication. However, this ignores the possibility that agents can benefit from mixing local computations by each agent and inter-agent interactions. Our goal will be to quantify how much TD methods can benefit from this. 

%A common setting of MARL problem is that all agents share the same environment influenced by the joint actions of the agents. Specifically, all agents share the same state space, action space, and current state, then each agent can take its own action, and consequently system moves to a new state based on the joint actions of the agents. In this process, each agent can only observe its own reward which is determined by the current state, new state, and its own action. All agents need to cooperatively achieve a common goal depending on local interactions. In the decentralized TD algorithm, agents aim to cooperatively evaluate the global accumulative reward by only observing local data and sharing estimations of the optimal parameter with neighbors. 

%In this paper, we will study the non-asymptotic convergence result of decentralized TD algorithm for policy evaluation problem in MARL. In the context of centralized TD, TD with linear function approximation performs well and is widely used \cite{sutton2018reinforcement} due to its simpleness. The interest of this paper is to extend the non-asymptotic convergence result for decentralized TD algorithm with linear function approximation.

\subsection{Related Literature}

A natural benchmark to compare the performance of distributed TD methods to is the performance of centralized TD methods.  Precise conditions for the asymptotic convergence result { were} first given in \cite{tsitsiklis1997analysis} by viewing TD as a stochastic approximation for solving a Bellman equation. Recently, there has been an increased interest in non-asymptotic convergence results, %which can help us understand the algorithm’s statistical efficiency with finite data samples
e.g., \cite{dalal2018finite0, bhandari2018finite}. The state of the art results show that, under i.i.d samples, TD algorithm with linear function approximation { converges} with rates of $O(1/\sqrt{T})$ for value function with step-size $1/\sqrt{T}$ and converge as fast as $O(1/t)$ with step-size $O(1/t)$ \cite{bhandari2018finite}.

Prior to this work, there have been several analyses of distributed TD with linear function approximation \cite{sun2019finite, doan2019finite, wang2020decentralized}. However, the model considered by those papers is very different than the model considered here, as those papers considered agents interacting collectively with an environment with a transition function that depends on all the actions taken by the agents. This is a much more difficult setting than what we consider in this paper, where we have $N$ MDPs which are completely decoupled, except insofar as the agents may choose to couple them via an exchange of messages.

Perhaps the most relevant previous work is \cite{shen2020asynchronous} which addresses actor-critic rather than temporal difference methods. It is shown there, up to a certain approximation error, it is possible to obtain a speedup proportional to the number of nodes for a distributed model of actor-critic. { Besides \cite{shen2020asynchronous}, another example of a similar result we are aware of is \cite{khodadadian2022federated}. However, \cite{khodadadian2022federated} came after the present work (note that \cite{khodadadian2022federated} cites the arxiv version \cite{liu2021distributed} of the present paper, which appeared on the arxiv about a year before the arxiv version of \cite{khodadadian2022federated}). The paper \cite{khodadadian2022federated} considers the much more general problem of distributed (or federated) stochastic approximation, which includes temporal difference learning as studied here, alongside many other problems (such as $Q$-learning). A linear speedup is obtained similar to our results here, but it requires 
$N$ averages throughout the course of the algorithm -- in contrast to the single averaging round required in this work.}

\subsection{Our contributions}  We show a version of a ``linear speedup'' phenomenon: under a number of assumptions, we show that the convergence bounds of a distributed algorithm with $N$ agents is a factor of $N$ faster than the corresponding convergence time bounds associated with a centralized version. To our knowledge, this is the first example of this phenomenon being demonstrated in reinforcement learning. 

%Specifically, we show that by simply ignoring each other until the very final step and averaging the outcomes of their local methods, the performance gets multiplied by $1/N$ with a step-size that decays as $1/t$, at the cost of terms that are asymptotically negligible for large $t$. For a constant step-size, we show that the size of the mean-square error gets multiplied by $1/N$ when the step-size is small enough, again at the cost of adding terms that are negligible when the number of iterations is large and the the step-size is small. Finally, for $1/\sqrt{T}$ step-size, we show that the convergence rate is a sum of two terms, the first  depending on the initial distance to the optimal solution, and the second depending on the update variance. While the first term is unaffected, the second term gets multiplied by $1/N$, and, as before, this happens at the cost of some asymptotically negligible terms that get added to the bounds. 

These results arguably justify the introduction of our model in this paper, which should be contrasted with the much harsher models  considered in the previous multi-agent reinforcement learning literature.  Indeed, the model presented here allows for the possibility of speeding up reinforcement learning by parallelizing computations.

\section{Preliminaries}
%To introduce decentralized TD algorithm, 

We begin by standardizing notation and providing standard background information on Markov Decision Processes and temporal difference methods.

\subsection{Markov Decision Processes}
A discounted reward MDP is described by a 5-tuple $(\mathcal{S},\mathcal{A},\mathcal{P},r,\gamma)$, where $\mathcal{S}=[n]=\{1,2,\cdots,n\}$ is a finite state space, $\mathcal{A}$ is a finite action space, $\mathcal{P}(s'|s,a):\mathcal{S} \times \mathcal{A} \times \mathcal{S} \rightarrow [0,1]$ is transition probability from $s$ to $s'$ determined by $a$, $r(s,a,s'): \mathcal{S} \times \mathcal{A} \times \mathcal{S} \rightarrow \mathbb{R}$ are deterministic rewards and $\gamma \in (0,1)$ is the discount factor. 

Let $\mu$ denote a fixed policy that maps a state $s \in \mathcal{S}$ to a probability distribution $\mu(\cdot|s)$ over the action space $\mathcal{A}$, so that $\sum_{a \in \mathcal{A}} \mu(a|s) =1$. For such a fixed policy $\mu$, define the instantaneous reward vector $R^{\mu} : \mathcal{S} \rightarrow \mathbb{R}$ as $$R^{\mu}(s)=\sum_{s' \in \mathcal{S}} \sum_{a \in \mathcal{A}} \mu(a|s)\mathcal{P}(s'|s,a)r(s,a,s').$$ Fixing the policy $\mu$ induces a probability transition matrix between states: $$P^{\mu}(s,s')=\sum_{a \in \mathcal{A}} \mu(a|s)\mathcal{P}(s'|s,a).$$ We will use $r_{t}=r(s_t,a_t,s_{t+1})$ to denote the instantaneous reward at time $t$, where $s_t$, $a_t$ are the state and action taken at step $t$. The value function of $\mu$, denoted by $V^{\mu}: \mathcal{S} \rightarrow \mathbb{R}$ is defined as 
$V^{\mu}(s)=E_{\mu,s}\left[\sum_{t=0}^{\infty} \gamma^tr_{t}\right]$, where $E_{\mu,s} \left[ \cdot \right]$ indicates that $s$ is the initial state and the actions are chosen according to the policy $\mu$. In the following, we will treat $V^{\mu}$ and $R^{\mu}$ as vectors in $\mathbb{R}^n$ and treat $P^{\mu}$ as a matrix in $\mathbb{R}^{n \times n}$. 

Next, we state { a standard assumption} on the underlying Markov chain.
\begin{assumption} \label{ass:mc}
The Markov chain with transition matrix $P^{\mu}$ is irreducible and aperiodic. 
\end{assumption}
A consequence of Assumption \ref{ass:mc} is that there exists a unique stationary distribution $\pi = (\pi_1, \pi_2, \cdots, \pi_n)$, a row vector whose entries are positive and sum to $1$. This stationary distribution satisfies $\pi^T P^{\mu} =\pi^T$ and $\pi_{s'} = \lim_{t \rightarrow \infty} (P^{\mu})^t(s,s')$ for any two states $s,s' \in \mathcal{S}$. {\em Note that we use $\pi$ to denote the stationary distribution and $\mu$ to denote the policy.}

%Let $r_{\rm max}$ denote an upper bound on the rewards, i.e., $r_{\rm max}$ is a real number such that $$|r(s,a,s')| \leq r_{{\rm max}} \mbox{ for all } s,s' \in \mathcal{S}, a \in \mathcal{A}.$$ Since the number of actions and states is finite, such an $r_{\rm max}$ always exists. 

We next provide definitions of two norms that we will have occasion to use later. For a positive definite matrix $A \in \mathbb{R}^{n \times n}$, we define the inner product $\langle x, y \rangle_{A} = x^T A y$ and the associated norm $\|x\|_A = \sqrt{x^T A x}$ respectively. Since the numbers $\pi_{s}$ are positive for all $s \in \mathcal{S}$, then the diagonal matrix $D = {\rm diag}(\pi_1,\cdots,\pi_n) \in \mathbb{R}^{n \times n}$ is positive definite. Therefore, for any two vectors $V,V' \in \mathbb{R}^n$, we can also define an inner product as $\left \langle V,V' \right \rangle _{D} = V^T D V'=\sum_{s \in \mathcal{S}}\pi_s V(s) V'(s),$and the associated norm as $$ \label{eq:def_D}
    \|V\|_{D}^2=V^T D V = \sum_{s \in \mathcal{S}} \pi_s V(s)^2.
$$

Finally, we introduce the definition of Dirichlet seminorm, following the notation of \cite{ollivier2018approximate}: $$\|V\|_{{\rm Dir}}^2 = \frac{1}{2} \sum_{s,s' \in \mathcal{S} } \pi_s P^{\mu}(s,s') (V(s')-V(s))^2.$$

\subsection{Temporal Difference Learning}

We next introduce the update rule of the classical temporal difference method with linear function approximation $V_{\theta}$, a linear function of $\theta$: 
\begin{equation}\label{eq:V_theta}
 V_{\theta}(s)=\sum_{l=1}^K \theta_l \phi_l(s) \quad \forall s \in \mathcal{S},
\end{equation}
where $\phi_l = (\phi_l(1),\cdots,\phi_l(n))^T \in \mathbb{R}^n$ for $l \in [K]$ are $K$ given feature vectors. Together, all K feature vectors form a $n \times K$ matrix $\Phi = (\phi_1, \cdots,\phi_K)$. For $s \in \mathcal{S}$, let $\phi(s) = (\phi_1(s),\cdots,\phi_K(s))^T \in \mathbb{R}^K$ denote the $s$-th row of matrix $\Phi$, a vector that collects the features of state $s$. Then, Eq. (\ref{eq:V_theta}) can be written in a compact form $V_{\theta}(s) = \theta^T \phi(s)$. For brevity, we will omit the superscript $\mu$ throughout from now on.

The TD(0) method maintains a parameter $\theta(t)$ which is updated at every step to improve the approximation. Supposing that we observe a sequence of states $\{s(t)\}_{t \in \mathbb{N}_0}$, then the classical TD($0)$ algorithm updates as $$\theta ({t+1})= \theta (t) + \alpha_t \delta(t) \phi(s(t)),$$
where $\{\alpha_t \}_{t \in \mathbb{N}_0}$ is the sequence of step-sizes, and letting $s'(t)$ denote the next state after $s(t)$, the quantity $\delta (t)$ is the temporal difference error $$\delta(t) = r (t) + \gamma \theta ^T(t) \phi(s' (t)) - \theta ^T(t) \phi(s (t)).$$

%We will use the notation $z(t)$ to denote the ``eligibility trace,''
%\begin{equation*}
% z(t) = \sum_{k=0}^t (\gamma \lambda)^{t-k} \phi(s({k})),
% \end{equation*} which is a geometrically weighted average of the feature vectors at all previously visited states. Then, a more convenient representation of the TD($\lambda$) is
%\begin{equation}
% {\theta}({t+1}) = {\theta}({t}) + \alpha_t \delta(t) z(t). \label{eq:lambda}
%\end{equation}

%The TD($0$) algorithm is a special case of TD($\lambda$) when $\lambda = 0$. It can be observed that the TD($0$) algorithm with linear function approximation makes parameter updates in the direction of the last feature vector $\phi(s (t))$:
%\begin{equation}
% {\theta}({t+1}) = {\theta}({t}) + \alpha_t \delta(t) \phi(s({t})), \label{eq:td}
%\end{equation} Because the TD($0$) method is easier to analyze, we will follow the existing literature in this paper by first presenting our results for TD($0$) before discussing generalizations TD($\lambda$). 

A common assumption on feature vectors in the literature \cite{tsitsiklis1997analysis,bhandari2018finite} is that features are linearly independent and uniformly bounded, which is formally given next.
\begin{assumption}\label{ass:features}
The matrix $\Phi$ has full column rank, i.e., the feature vectors $\{ \phi_1, \ldots, \phi_K\}$ are linearly independent. Additionally, we have that $ \|\phi(s)\|_2^2 \leq 1$ for $s \in \mathcal{S}$.
\end{assumption}

Under Assumption \ref{ass:mc} and \ref{ass:features}, we introduce the steady-state feature covariance matrix $\Phi^T D \Phi$. { Note that,} this is a positive definite matrix as an immediate consequence of Assumptions \ref{ass:mc} and \ref{ass:features}, and we let $\omega > 0$ be a lower bound on its smallest eigenvalue. %and let $\omega >0 $ is a lower bound of the minimum eigenvalue of matrix $\Phi^T D \Phi$.

% We will assume that the state $s(t)$ is generated i.i.d. according to the stationary distribution $\pi$. In the literature, this is sometimes referred to as having a ``generator'' for the MDP and is a more restrictive assumption compared to assuming that the state evolves as a Markov process with a fixed starting state. Nevertheless, this is a standard assumption under which many temporal difference and Q-learning methods are analyzed (e.g., \cite{sutton2008convergent,dalal2018finite0,dalal2018finite,lakshminarayanan2018linear,doan2019finite,chen2019finite,kumar2019sample}).

We will use the fact, shown in \cite{tsitsiklis1997analysis}, that under Assumptions \ref{ass:mc}-\ref{ass:features} as well as an additional assumption on the decay of the step-sizes $\alpha_t$, the sequence of iterates $\{\theta_t\}$ generated by TD($0$) learning converges almost surely { to} a vector satisfying a certain projected Bellman equation; we will use $\theta^*$ to refer to this vector.% When $\lambda=0$, we will simply refer to this vector as $\theta^*$. 

\subsection{The Distributed Model} 

%The description below is broadly similar to the description for the global state model with a few key differences. 

%We now consider a multi-agent MDP is characterized by a 6-tuple $(\mathcal{S},\mathcal{V},\mathcal{A}, \mathcal{P}, r, \gamma)$. The set of actions $\mathcal{A}$ and rewards function $r$ will now be the same for every agent. Likewise, a given policy $\mu$ will correspond to every agent taking action $a$ in state $s$ with probability $\mu(a,s)$. 

%We next introduce a model we call the ``local state model.'' 
We consider the scenario where each agent has its own independently evolving copy of the same MDP. More formally, each agent has the same 6-tuple $(\mathcal{S},\mathcal{V},\mathcal{A}, \mathcal{P}, r, \gamma)$; at time $t$, agent $v$ will be in a state $s_v(t)$; it will apply action $a_{v}(t) \in \mathcal{A}$ with probability $\mu( a_v(t)|s_v(t))$; then agent $v$ moves to state $s'_v(t)$ with probability $\mathcal{P}(s'_v(t)| s_v(t), a_v(t))$, {with the transitions of all agents being independent of each other}; finally agent $v$ gets a reward $r_{v}(t) = r(s_v(t),a_v(t),s'_v(t))$. Note that, although the rewards obtained by different agents can be different, the reward function $r(s,a,s')$ is identical across agents. %Note that the transition probability does not depend on the agent in this model. 

%Informally, in the global state model, the same state is shared across all the agents, while actions may differ from agent to agent. In the local state model, each agent effectively has its own copy of the same underlying MDP. However, a policy is implemented by all agents identically. The goal is still to evaluate the value of a policy defined by Eq. (\ref{eq:polvalue}).

%We define $\theta^*$ to be the fixed point of TD($0$) on the MDP $(\mathcal{S},\mathcal{V},\mathcal{A}, \mathcal{P}, r, \gamma)$.
%Given a policy that acts identically at each agent, we could evaluate it by applying TD($\lambda$) to the rewards function $r$, just like in the centralized model. As discussed earlier, this is not naturally distributed. Defining $\theta_{\lambda, \rm lc}^*$ to be the fixed point of this method (and just $\theta^*$ in the case when $\lambda = 0$), our goal is to compute $\theta_{\lambda, \rm lc}^*$ with a distributed method. 
Naturally, each agent can easily compute $\theta^*$ by simply ignoring all the other agents and running TD(0) locally. However, this ignores the possibility that agents can benefit from communication with each other. Along these lines, we propose our main method below as Algorithm \ref{algo:our}: each agent runs TD($0$) locally without any communication, and, at the end, the agents simply average the results. 

\begin{algorithm}[ht]
\caption{Parallel TD($0$)} \label{algo:our}
\begin{algorithmic}[1]
\STATE For {$v \in \mathcal{V}$}, initialize $\theta_v(0)$, $s_v(0)$ %and let $z_v(-1) = 0$ \
\FOR {$t=0$ to $T-1$} 
 \FOR{$v \in \mathcal{V}$}
 \STATE Observe a tuple $(s_v(t),s_v'(t),r_v(t))$.
 \STATE Compute temporal difference: \begin{equation}\label{eq:delta_lc}
 \delta_v(t) = r_v(t) - \left(\phi(s_v(t))- \gamma \phi(s_v'(t))\right)^T \theta_v(t).
 \end{equation}
 %\STATE Update the eligibility trace:
 %\begin{equation*}
 %z_v(t) = \gamma \lambda z_v(t-1) + \phi(s_v(t)) .
 %\end{equation*}
 \STATE Execute local TD update: 
 \begin{equation} \label{eq:DTD2}
 {\theta}_v({t+1}) = \theta_v(t) + \alpha_t \delta_v(t)\phi_v(s_v(t)).
 \end{equation}
 \STATE Update the local running average: 
 \[ \hat{\theta}_v(t+1) = \left( 1 - \frac{1}{t+2} \right) \hat{\theta}_v(t) + \frac{1}{t+2} \theta_v(t+1). \]
 \ENDFOR
\ENDFOR
\STATE Return $\hat{\theta}(T) = \frac{1}{N} \sum_{v \in \mathcal{V}} \hat{\theta}_v(T)$ and $\bar{\theta}(T)= \frac{1}{N} \sum_{v \in \mathcal{V}} {\theta}_v(T).$
\end{algorithmic}
\end{algorithm}

\noindent {\em Distributed implementation:} The final averaging step represents the only interactions among the agents. Under the assumption that the nodes are connected to a server, computing the average in step 10 takes a single round of communication with a server. In the more common ``nearest neighbor'' model where the agents are connected over an undirected graph and nodes know the total number of nodes $N$, it is possible to find an $\epsilon$-approximation of the average in $O(N \log (1/\epsilon))$ time using the average consensus algorithm from \cite{olshevsky2017linear}. One could also a finite-time average consensus method, see e.g., \cite{doostmohammadian2020single}. If  knowledge of the number of nodes not available, and the communication graph is further time-varying, it is possible to do the same in $O(N^2 \log (1/\epsilon)$ using the average consensus algorithm from \cite{nedic2009distributed}. Finally, if the underlying graph is directed, one can use the popular push-sum for average consensus method \cite{kempe2003gossip, benezit2010weighted} whose convergence rate is geometric, though the question of whether a version of it can have a polynomial convergence rate in terms of $N$ is open. 

As we will later discuss, it suffices to choose $\epsilon$ in the previous paragraph proportional to a power of $1/T$ (where $T$ is the number of iterations, decided on ahead of time), so that {\em the distributed message complexity of step 10  is  $O(\log T)$ under any of the models discussed. }

\section{Convergence Analyses of Our Method} 

We next describe the main result of this paper, which is an analysis of Algorithm \ref{algo:our} %For Algorithm \ref{algo:doan}, we will study that the state $s(t)$ is generated i.i.d. according to the stationary distribution $\pi$; while for Algorithm \ref{algo:our}, we will assume that the state $s(t)$ is generated i.i.d. according to the stationary distribution $\pi$.
under the assumption that the tuples in step 4 are i.i.d. In the literature, the i.i.d model is sometimes referred to as having a ``generator'' for the MDP and is a more restrictive assumption compared to assuming that the state evolves as a Markov process with a fixed starting state. Nevertheless, this is a standard assumption under which many TD and Q-learning methods are analyzed (e.g., \cite{dalal2018finite0,dalal2018finite,doan2019finite,kumar2019sample} among others).

We begin with  some notations. For centralized TD(0), convergence bounds generally scale both with the distance to the initial solution, and with the variance  of the temporal difference error with average reward: $${\sigma}^2 = {\rm E} \left[\left( {r}(s,a,s') - \left(\phi(s)- \gamma \phi(s')\right)^T \theta^* \right)^2\right].$$ Here the expectation is taken with respect to the distribution that generates the state $s$ with probability $\pi_s$, { the actions} $(a_1, \ldots, a_n)$ from the policy, and the next state $s'(t)$ from the transition of the MDP. We will use the same notation in the distributed setting, where this quantity is identical across agents, since the agents are all simulating the same MDP. 

Further, we need a notion of the initial distance to the optimal solution;  for simplicity, we take the maximum over all the agents to define: $$ \hat{R}_0 = \max_{v \in \mathcal{V}} E\left[ \left\|{\theta}_v({0}) - \theta^*\right\|_2^2 \right].$$

%In the case where all agents start with the same initial condition, this reduces to the same quantities as we had before, i.e., $R_0 = ||\bar{\theta}(0) - \theta^*||_2^2$.

% \sigma^2 = \hat{\sigma}^2$ in that case, However, our analysis here is applicable to the more general case of different rewards across agents and different initial conditions. 

%In this subsection, we consider the convergence results for Algorithm \ref{algo:our}, where each agent has its own state samples and states of all agents are independent. It turns out the convergence results for Algorithm \ref{algo:our} is better than the convergence results for not only existing distributed TD($0$) but also classical TD($0$). Compared with Theorem 2 in \cite{bhandari2018finite} (which concerns centralized TD($0$)), our results improve the constant factor by getting one term in it being divided by the number of agents $N$.

%The convergence results for Algorithm \ref{algo:our} is formally stated in the next theorem, which is the main result of this paper. For constant step-size $O(1/\sqrt{T})$ and also decaying step-size $O(1/t)$, the convergence results obtained by Theorem \ref{thm:main} is better than the existing convergence results for centralized TD($0$) algorithm, in terms of the constants which are divided by the number of agents $N$.

The following theorem is our  main result. Note that the equations are color-coded, with the meaning of the colors explained below.  

\begin{theorem} \label{thm:main}
Suppose Assumptions \ref{ass:mc}-\ref{ass:features} hold and suppose that the tuples in step 4 of Algorithm \ref{algo:our} are generated i.i.d. with each $s_v(t)$ sampled from the stationary distribution $\pi$, and $r_v(t)$ being the reward and $s_v'(t)$ being the next state when the action is taken from the policy $\mu$. Then: 

(a) For any constant step-size sequence $\alpha_0 = \cdots = \alpha_T= \alpha \leq (1- \gamma)/8$, we have
\begin{small} 
\begin{align*}
 & E\left[(1-\gamma) \left\|V_{\theta^*} - V_{\hat{\theta}(T)} \right\|_{D}^2 + \gamma \left\|V_{\theta^*} - V_{\hat{\theta}(T)} \right\|_{{\rm Dir}}^2 \right] \\
 & ~~~~~~~~~~  \leq \frac{{\br 1}}{ {\br T }} \left(\frac{{\br 1}}{{\br 2\alpha}} {\br E}\left[ {\br ||\bar{\theta}({0}) - \theta^*||_2^2 } \right] +\frac{{\br 4\hat{R}_0}}{{\br 1-\gamma}} \right)+ \frac{ \alpha {{\bl \sigma}}^2}{{\bl N}} + \frac{{\br 8 \alpha^2 {\sigma}^2}}{{\br 1-\gamma}} .
\end{align*} \end{small} 

(b) For any $T \geq \frac{64}{ (1-\gamma)^2}$ and constant step-size sequence $\alpha_0 = \cdots = \alpha_T =\frac{1}{\sqrt{T}}$, we have
\begin{small} 
\begin{align*}
  & E\left[(1-\gamma) \left\|V_{\theta^*} - V_{\hat{\theta}(T)} \right\|_{D}^2 + \gamma \left\|V_{\theta^*} - V_{\hat{\theta}(T)} \right\|_{{\rm Dir}}^2 \right] \notag \\
 & ~~~~~~~~~~~~~~\leq \frac{1}{2 \sqrt{T} }\left( E\left[ \left\|\bar{\theta}({0}) - \theta^*\right\|_2^2 \right]+\frac{ 2 {{\bl \sigma}}^2}{{\bl N}} \right) +\frac{{\br 1}}{{\br T}} \left(\frac{{\br 4\hat{R}_0+ 8{\sigma}^2} }{{\br 1-\gamma}} \right).
\end{align*}
\end{small} 

(c) For the decaying step-size sequence $\alpha_t = \frac{\alpha}{t+\tau}$ with $\alpha = \frac{2}{(1-\gamma)\omega}$ and $\tau = \frac{16}{(1-\gamma)^2\omega}$. Then, 
\begin{small} 
\begin{align*}
 E \left[ \left\|\bar{\theta}({t+1}) - \theta^*\right\|_2^2 \right] \leq  & \frac{2 \alpha ^2 {{\bl \sigma}}^2/{\bl N}}{t+\tau} + \frac{{\br 8 \alpha^2 \hat{\zeta} }}{{\br (t+\tau)^2}}\\
 & + \frac{{\br (\tau-1)^4E \left[ \left\|\bar{\theta}({0}) - \theta^*\right\|_2^2 \right]} }{{\br (t+\tau)^4}},
\end{align*}
\end{small} where $\hat{\zeta} = \max\left\{ {2 \alpha^2 {\sigma}^2}, \tau \hat{R}_0 \right\}$.
\end{theorem}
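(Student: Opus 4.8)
I would build on the ``TD as a pseudo-gradient method'' viewpoint of \cite{bhandari2018finite}. For a tuple $(s,s',r)$ put $g(\theta;s,s',r)=\bigl(r-(\phi(s)-\gamma\phi(s'))^{T}\theta\bigr)\phi(s)$ and let $\bar g(\theta)=\Phi^{T}D\bigl(R+(\gamma P-I)V_{\theta}\bigr)$ be its mean when $s\sim\pi$; two facts drive everything. First, $\bar g$ is \emph{affine} in $\theta$ and $\bar g(\theta^{*})=0$ (the projected Bellman equation defining $\theta^{*}$), so $\bar g(\theta)=\Phi^{T}D(\gamma P-I)(V_{\theta}-V_{\theta^{*}})$. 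Second, with $x=V_{\theta}-V_{\theta^{*}}=\Phi(\theta-\theta^{*})$ and the stationarity identity $x^{T}DPx=\|x\|_{D}^{2}-\|x\|_{\rm Dir}^{2}$, one obtains the exact equality $-\langle \theta-\theta^{*},\bar g(\theta)\rangle=(1-\gamma)\|V_{\theta}-V_{\theta^{*}}\|_{D}^{2}+\gamma\|V_{\theta}-V_{\theta^{*}}\|_{\rm Dir}^{2}=:M(\theta)$, the quantity in the left-hand sides of (a),(b). A short computation of the same type gives $\|\bar g(\theta)\|_{2}^{2}\le 2M(\theta)$ and $\mathbb E\|g(\theta;s,s',r)\|_{2}^{2}\le 2\sigma^{2}+4M(\theta)$, and from $\|\phi(s)\|_{2}\le1$ and $\Phi^{T}D\Phi\succeq\omega I$ also $(1-\gamma)\omega\|\theta-\theta^{*}\|_{2}^{2}\le M(\theta)\le 2\|\theta-\theta^{*}\|_{2}^{2}$. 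As a preliminary step, each copy runs plain TD(0), so expanding $\|\theta_{v}(t+1)-\theta^{*}\|_{2}^{2}$, conditioning, and using $\alpha_{t}\le(1-\gamma)/8$ to absorb the $O(\alpha_{t}^{2}M)$ term into $-2\alpha_{t}M$ gives $\mathbb E[\|\theta_{v}(t+1)-\theta^{*}\|_{2}^{2}\mid\mathcal F_{t}]\le\|\theta_{v}(t)-\theta^{*}\|_{2}^{2}-\alpha_{t}M_{v}(t)+2\alpha_{t}^{2}\sigma^{2}$ with $M_{v}(t):=M(\theta_{v}(t))$; summing over $v,t$ bounds the ``drift'' averages $U(t):=\tfrac1N\sum_{v}\|\theta_{v}(t)-\theta^{*}\|_{2}^{2}$ and $\tfrac1N\sum_{v}M_{v}(t)$ with \emph{no communication} (for constant $\alpha$: $\sum_{t<T}\tfrac1N\sum_{v}\mathbb E[M_{v}(t)]\le\hat R_{0}/\alpha+2\alpha\sigma^{2}T$; for $\alpha_{t}=\alpha/(t+\tau)$ with the stated $\alpha,\tau$ it becomes $u_{t+1}\le(1-\tfrac2{t+\tau})u_{t}+\tfrac{2\alpha^{2}\sigma^{2}}{(t+\tau)^{2}}$, whence $u_{t}\le\hat\zeta/(t+\tau)$ with $\hat\zeta=\max\{2\alpha^{2}\sigma^{2},\tau\hat R_{0}\}$ by induction).

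The heart of the proof is the recursion for the averaged iterate $\bar\theta(t)=\tfrac1N\sum_{v}\theta_{v}(t)$, which obeys $\bar\theta(t+1)=\bar\theta(t)+\alpha_{t}\bar g_{t}$ with $\bar g_{t}=\tfrac1N\sum_{v}g(\theta_{v}(t);s_{v}(t),s_{v}'(t),r_{v}(t))$. By affinity, $\mathbb E[\bar g_{t}\mid\mathcal F_{t}]=\tfrac1N\sum_{v}\bar g(\theta_{v}(t))=\bar g(\bar\theta(t))$, so the first-order term of the expansion of $\|\bar\theta(t+1)-\theta^{*}\|_{2}^{2}$ is exactly $-2\alpha_{t}M(\bar\theta(t))$ and carries \emph{no} consensus error. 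For the second moment, write $\bar g_{t}-\bar g(\bar\theta(t))=\tfrac1N\sum_{v}\xi_{v}(t)$ with $\xi_{v}(t):=g(\theta_{v}(t);\cdot)-\bar g(\theta_{v}(t))$ mean-zero given $\mathcal F_{t}$ and, crucially, \emph{independent across $v$} (the MDP copies are decoupled); then $\mathbb E[\|\bar g_{t}\|_{2}^{2}\mid\mathcal F_{t}]=\|\bar g(\bar\theta(t))\|_{2}^{2}+\tfrac1{N^{2}}\sum_{v}\mathbb E[\|\xi_{v}(t)\|_{2}^{2}\mid\mathcal F_{t}]\le\|\bar g(\bar\theta(t))\|_{2}^{2}+\tfrac{2\sigma^{2}}{N}+\tfrac4N\cdot\tfrac1N\sum_{v}M_{v}(t)$, and the term $2\sigma^{2}/N$ — the averaged TD error at $\theta^{*}$ — is the sole source of the factor-$N$ gain. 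Bounding the deterministic part by Jensen, $\|\bar g(\bar\theta(t))\|_{2}^{2}=\|\tfrac1N\sum_{v}\bar g(\theta_{v}(t))\|_{2}^{2}\le 2\cdot\tfrac1N\sum_{v}M_{v}(t)$, so that it joins the drift bucket rather than eating into the descent, gives
\[
 \mathbb E\bigl[\|\bar\theta(t+1)-\theta^{*}\|_{2}^{2}\mid\mathcal F_{t}\bigr]\le\|\bar\theta(t)-\theta^{*}\|_{2}^{2}-2\alpha_{t}M(\bar\theta(t))+\tfrac{2\alpha_{t}^{2}\sigma^{2}}{N}+6\alpha_{t}^{2}\cdot\tfrac1N\textstyle\sum_{v}M_{v}(t).
\]

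It remains to assemble. For (a) I would take full expectations, telescope over $t=0,\dots,T-1$, substitute $\sum_{t<T}\tfrac1N\sum_{v}\mathbb E[M_{v}(t)]\le\hat R_{0}/\alpha+2\alpha\sigma^{2}T$, divide by $2\alpha T$, and pass from $\tfrac1T\sum_{t}\mathbb E[M(\bar\theta(t))]$ to $\mathbb E[M(\hat\theta(T))]$ using convexity of $\|\cdot\|_{D}^{2},\|\cdot\|_{\rm Dir}^{2}$ together with $V_{\hat\theta(T)}=\tfrac1{T+1}\sum_{k\le T}V_{\bar\theta(k)}$; this yields (a). Part (b) is (a) specialized to $\alpha=1/\sqrt T$, with $T\ge64/(1-\gamma)^{2}$ guaranteeing $\alpha\le(1-\gamma)/8$. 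For (c) I would instead use $M(\bar\theta(t))\ge(1-\gamma)\omega\|\bar\theta(t)-\theta^{*}\|_{2}^{2}$ in the displayed recursion; the choice $\alpha=2/((1-\gamma)\omega)$ makes the contraction factor exactly $1-\tfrac4{t+\tau}$, i.e. $d_{t+1}\le(1-\tfrac4{t+\tau})d_{t}+\tfrac{2\alpha^{2}\sigma^{2}/N}{(t+\tau)^{2}}+\tfrac{6\alpha^{2}}{(t+\tau)^{2}}\cdot\tfrac1N\sum_{v}\mathbb E[M_{v}(t)]$, and substituting $\tfrac1N\sum_{v}\mathbb E[M_{v}(t)]\le2u_{t}\le2\hat\zeta/(t+\tau)$ and solving this linear recurrence — whose homogeneous part decays like $\bigl(\tau/(t+\tau)\bigr)^{4}$ (explicitly $\prod_{s\le t}(1-\tfrac4{s+\tau})$, which is of order $(\tau-1)^{4}/(t+\tau)^{4}$) — produces the three terms in the theorem: the $\sigma^{2}/N$ inhomogeneity at rate $1/(t+\tau)$, the drift at rate $1/(t+\tau)^{2}$, and the homogeneous part at rate $(\tau-1)^{4}/(t+\tau)^{4}$.

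The delicate point — and the one I expect to be the main obstacle — is that with only a single averaging round the copies $\theta_{v}(t)$ really do spread out: with a constant step-size $U(t)$ does not tend to $0$ but settles at an $O(\alpha\sigma^{2})$ floor. The scheme nevertheless works because (i) the drift quantity $\tfrac1N\sum_{v}M_{v}(t)$ enters the $\bar\theta$-recursion purely at order $\alpha_{t}^{2}$, never at order $\alpha_{t}$ — which is exactly what affinity of $\bar g$ buys us, since it removes the would-be first-order consensus term — and (ii) its accumulation in time is already controlled by the single-agent TD(0) telescoping, so it only inflates the non-speedup pieces ($\hat R_{0}/T$ and the $O(\alpha^{2})$ bias), while the genuine $1/N$ improvement stays isolated in the cross-agent-independent noise $\sigma^{2}/N$. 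Keeping these three ``buckets'' separate — descent in $M(\bar\theta(t))$, variance $\sigma^{2}/N$, and drift $\tfrac1N\sum_{v}M_{v}(t)$ — and checking the step-size thresholds that keep each bucket in its place is the crux; everything else is the Dirichlet-form bookkeeping inherited from the single-agent theory.
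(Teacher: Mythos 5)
Your plan follows the paper's own route essentially step for step: the same affine-drift identity $-\langle\theta-\theta^*,\bar g(\theta)\rangle=(1-\gamma)\|V_\theta-V_{\theta^*}\|_D^2+\gamma\|V_\theta-V_{\theta^*}\|_{\rm Dir}^2$ (which the paper imports from its Corollary 1 reference and you rederive from $x^TDPx=\|x\|_D^2-\|x\|_{\rm Dir}^2$), the same observation that affinity makes the first-order term for $\bar\theta(t)$ exact with the $\sigma^2/N$ isolated by cross-agent independence, the same single-agent telescoping (the paper's Eq.~(8)) to control the drift $\tfrac1N\sum_v M_v(t)$, the same Jensen/convexity step for (a), specialization for (b), and contraction-plus-product bound for (c) (where the paper simply cites Theorem 2(c) of Bhandari et al.\ for the single-agent $\hat\zeta/(t+\tau)$ bound that you re-prove). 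The one genuinely different ingredient is your second-moment treatment: you use the exact conditional bias--variance decomposition $E[\|\bar g_t\|^2\mid\mathcal F_t]=\|\bar g(\bar\theta(t))\|^2+\tfrac1{N^2}\sum_vE[\|\xi_v\|^2\mid\mathcal F_t]$ together with the sharper bounds $E\|g\|^2\le2\sigma^2+4M(\theta)$ and $\|\bar g(\theta)\|^2\le2M(\theta)$ (both of which check out), whereas the paper uses the cruder split $\|a-b\|^2\le2\|a\|^2+2\|b\|^2$ with $a$ the averaged noise at $\theta^*$, yielding a drift coefficient of $8$ on $\|V_{\theta_v}-V_{\theta^*}\|_D^2$. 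Your version gives a drift coefficient of $2+4/N\le6$ on $M_v$, which slightly improves the constants in (a) and (b); but since $M_v$ can be as large as $(1+\gamma)\|\theta_v-\theta^*\|_2^2$, tracing it through part (c) produces $12\alpha^2\hat\zeta/(t+\tau)^2$ in place of the stated $8\alpha^2\hat\zeta/(t+\tau)^2$ --- a harmless constant-factor mismatch with the theorem as written, not a gap in the argument.
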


The proof of Theorem \ref{thm:main} is given in the { section \ref{sec:proof}}. To parse Theorem \ref{thm:main}, note that all the terms in brown are ``negligible'' in a limiting sense. Indeed, in part (a), the first brown term scales as $O(1/T)$ and consequently goes to zero as $T \rightarrow \infty$ (whereas the remaining terms do not). In parts (b) and (c), the terms in brown go to zero at an asymptotically faster rate compared to the dominant term (i.e., as $1/T$ vs the dominant $1/\sqrt{T}$ term in part(b) and as $1/t^2, 1/t^4$ compared to the dominant $1/t$ in part (c)). Finally, the last term in part (a) scales as $O(\alpha^2)$ and will be negligible compared to the term preceding it, which scales as $O(\alpha)$, when $\alpha$ is small. 

Moreover, among the non-negligible terms, whenever ${\sigma}^2$ appears, it is divided by $N$; this is highlighted in blue. 

{\em To summarize: parts (b) and (c) show that, when the number of iterations is large enough, we can divide the variance term by $N$ as a consequence of the parallelism among $N$ agents. Part (a) shows that, when the number of iterations is large enough and the step-size is small enough, the size of the final error will be divided by $N$. }

Note that, in part (c), the result of this is a factor of $N$ speed up of the entire convergence time (when $T$ is large enough). In part (a), this results in a factor of $N$ shrinking of the asymptotic error (when the step-size $\alpha$ is small enough). In part (b), however, this only shrinks the ``variance term'' by a factor of $N$; the term depending on the initial condition is unaffected. The explanation for this is that in parts (a) and (c), the variance of the temporal difference error dominates the convergence rate, while in part (b) this is not the case.

%\smallskip 

%\noindent {\bf Comparison to the later literature:} 
As far as we are aware, these results constitute the first example where parallelism was shown to be helpful for distributed temporal difference learning. In particular, these results motivate a multi-agent approach even if the underlying MDP is centralized in order to speed-up computation.

\smallskip {\noindent \bf Required accuracy for the averaging step.} For simplicity, we have given Theorem 1 under the assumption
that the final averages $\hat{\theta}(T), \bar{\theta}(T)$  are computed exactly. We now come back to the question of how accurate the final averaging step needs to be to preserve our theoretical guarantees. 
It is immediate that all the quantities we bound in Theorem 1 (i.e., the left-hand sides of all the equations) are Lipschitz
in a neighborhood of $\theta^*$. Thus in Theorem 1(a) we need only a constant error in the averaging step, while in { Theorem 1(b) and 1(c)} we need an error rate proportional to a power or $1/T$. Since all average consensus methods previously discussed compute an $\epsilon$-approximation to average consensus in $O(\log 1/\epsilon)$ steps (treating all other variables as constants), {\em this means that step 10 in our method requires us to run a distributed average consensus method for at most  $O(\log T)$  (treating all variables except $T$ as constants) as previously claimed. }

% \smallskip {\noindent \bf Simulations} are not included due to page constraints but are included on the version of this paper on the arxiv \cite{liu2021distributed} (which also contains additional results).

\section{Proof of our main result}\label{sec:proof}

We now provide the proof of Theorem \ref{thm:main}. Let $\Theta(t)\in \mathbb{R}^{N \times K}$ be a matrix whose rows are $\theta_1^T(t), \cdots, \theta_N^T(t)$. The following proposition follows immediately from the definitions (and recall here our notation of putting a bar to denote the network-wide average). 

\begin{proposition} \label{pro:bar_lc}
Suppose Assumptions \ref{ass:mc}-\ref{ass:features} hold, and suppose that $\{\theta_v(t)\}_{v \in \mathcal{V}} $ are generated by Algorithm \ref{algo:our}. Then,

(a) $\bar{h}(t)$ is a linear function of $\bar{\theta}(t)$ and we can write $\bar{h}(t) = {b} -A \bar{\theta}(t).$ %where $A,b_v$ are defined in Eq. (\ref{eq:def_A}) and Eq. (\ref{eq:def_b}) for $\lambda =0 $, and $\bar{b}= \frac{1}{N} \sum_{v \in \mathcal{V}} b_{v}$;

(b) The conditional expectation of $\bar{m}(t)$ given $\Theta(t)$ is equal to zero:
\begin{equation}
 E[\bar{m}(t)| \Theta({t}) ] = 0. \label{eq:con_exp_lc}
\end{equation}
\end{proposition}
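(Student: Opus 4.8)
The plan is to unwind the mean/noise decomposition of the local TD direction that underlies Algorithm~\ref{algo:our}; once that is set up, both parts are essentially one line. First I would rewrite the local update Eq.~(\ref{eq:DTD2}) as
\[
\theta_v(t+1) = \theta_v(t) + \alpha_t\big(h_v(t) + m_v(t)\big),
\]
where $h_v(t)$ denotes the conditional expectation of the local TD direction $\delta_v(t)\phi(s_v(t))$ given $\theta_v(t)$, and $m_v(t) := \delta_v(t)\phi(s_v(t)) - h_v(t)$ is the residual. Introduce the population quantities $A := E\big[\phi(s)\big(\phi(s)-\gamma\phi(s')\big)^{T}\big]$ and $b := E\big[r(s,a,s')\phi(s)\big]$ with $s\sim\pi$, $a\sim\mu(\cdot\mid s)$, $s'\sim\mathcal P(\cdot\mid s,a)$. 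The elementary fact to record is that for any fixed $\theta$ one has $E\big[\big(r(s,a,s') - (\phi(s)-\gamma\phi(s'))^{T}\theta\big)\phi(s)\big] = b - A\theta$; hence $h_v(t) = b - A\,\theta_v(t)$. The network averages are then $\bar h(t) := \tfrac1N\sum_{v\in\mathcal V}h_v(t)$ and $\bar m(t) := \tfrac1N\sum_{v\in\mathcal V}m_v(t)$.

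For part (a), I would use that $h_v(t) = b - A\theta_v(t)$ is affine in $\theta_v(t)$: averaging over the $N$ agents and pulling the average inside the affine map yields $\bar h(t) = \tfrac1N\sum_{v\in\mathcal V}\big(b - A\theta_v(t)\big) = b - A\bar\theta(t)$, which is the claimed (affine-)linear dependence on $\bar\theta(t)$.

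For part (b), I would condition on $\Theta(t)$ and argue that the tuple observed by agent $v$ at step $t$ is independent of $\Theta(t)$. Concretely, $\Theta(t)$ is a deterministic function of the initializations $\{\theta_v(0)\}_{v\in\mathcal V}$ and of the tuples $\{(s_v(\tau),s_v'(\tau),r_v(\tau))\}_{v\in\mathcal V,\ \tau<t}$; by the i.i.d.\ sampling hypothesis of Theorem~\ref{thm:main}, the step-$t$ tuples are independent of all of these (and of one another across agents, by the independence of the transitions). Consequently $E[\delta_v(t)\phi(s_v(t))\mid\Theta(t)]$ is obtained by freezing $\theta_v(t)$ at its realized value and averaging $\big(r_v(t)-(\phi(s_v(t))-\gamma\phi(s_v'(t)))^{T}\theta_v(t)\big)\phi(s_v(t))$ over the stationary law of $(s_v(t),s_v'(t),r_v(t))$, which gives $b - A\theta_v(t) = h_v(t)$. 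Hence $E[m_v(t)\mid\Theta(t)] = 0$ for every $v$, and averaging over $v$ gives $E[\bar m(t)\mid\Theta(t)] = 0$.

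The only step that needs any care — and the nearest thing to an obstacle here — is the measurability/independence argument in part (b): one must verify that $\Theta(t)$ depends only on randomness strictly prior to step $t$, so that conditioning on it does not distort the law of the current tuple. Everything else is pure bookkeeping, which is why the proposition is flagged as following immediately from the definitions.
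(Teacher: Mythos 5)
Your proof is correct and matches the paper's intent: the paper offers no explicit proof, asserting the proposition ``follows immediately from the definitions,'' and your decomposition $h_v(t)=E[\delta_v(t)\phi(s_v(t))\mid\theta_v(t)]=b-A\theta_v(t)$, $m_v(t)=\delta_v(t)\phi(s_v(t))-h_v(t)$ is exactly the definition the paper relies on later (e.g.\ in the proof of Lemma~\ref{lem:2}). Your added care about $\Theta(t)$ being measurable with respect to the pre-$t$ randomness, so that the i.i.d.\ step-$t$ tuples are independent of it, is the right justification for part~(b) and fills in the detail the paper leaves implicit.
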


Our next step is to prove a recurrence relation satisfied by the average of the iterates, stated as the following lemma. %The key differences between this lemma and { the previously-proved version in the global state model \cite{liu2021distributed}} is that the quantity ${\sigma}^2$ that appears in this recursion will now be divided by $N$, at the cost of the addition of an extra term we will have to deal with.
Recall that $\theta^*$ is the fixed point of TD($0$) on the MDP $(\mathcal{S},\mathcal{V},\mathcal{A}, \mathcal{P}, r, \gamma)$.

\begin{lemma}\label{lem:2}
Suppose Assumptions \ref{ass:mc}-\ref{ass:features} hold. Further suppose that $\{\theta_v\}_{v \in \mathcal{V}}$ are generated by Algorithm \ref{algo:our}. For $t \in \mathbb{N}_0$, we have that 
\begin{small}
    \begin{align}
  & E\left[\left\|\bar{\theta}({t+1}) - \theta^*\right\|_2^2 \right] \leq  E\left[ \left\|\bar{\theta}({t}) - \theta^*\right\|_2^2 \right] \notag \\ & +\alpha_t^2 \left(\frac{2{\sigma}^2}{N} + \frac{8}{N} \sum_{v \in \mathcal{V}} E\left[ \|V_{\theta_v(t)} - V_{\theta^*} \|_{D}^2\right]\right ) \notag \\
 &  - 2 \alpha_t E\left[ (1-\gamma) \left\|V_{\theta^*} - V_{\bar{\theta}(t)} \right\|_{D}^2+ \gamma \left\|V_{\theta^*} - V_{\bar{\theta}(t)} \right\|_{{\rm Dir}}^2 \right]. \label{eq:recc_relation}
\end{align}
\end{small}

\end{lemma}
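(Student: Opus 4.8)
\medskip
\noindent\textbf{Proof plan for Lemma \ref{lem:2}.} The plan is to expand the one-step progress $\|\bar\theta(t+1)-\theta^*\|_2^2$, condition on $\Theta(t)$, and estimate the cross term and the second-moment term separately; the factor $1/N$ will enter only through the second-moment term. Writing $\bar\theta(t+1)-\theta^* = \big(\bar\theta(t)-\theta^*\big) + \alpha_t\,\bar g(t)$ with $\bar g(t) := \tfrac1N\sum_{v\in\mathcal V}\delta_v(t)\phi(s_v(t)) = \bar h(t)+\bar m(t)$, expanding the square, and taking the expectation conditional on $\Theta(t)$, Proposition \ref{pro:bar_lc}(b) eliminates the $\bar m(t)$ contribution to the cross term, leaving $\|\bar\theta(t)-\theta^*\|_2^2 + 2\alpha_t\langle\bar\theta(t)-\theta^*,\bar h(t)\rangle + \alpha_t^2\,E\big[\|\bar g(t)\|_2^2\mid\Theta(t)\big]$. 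By Proposition \ref{pro:bar_lc}(a) and the fact that $\theta^*$ is the TD fixed point (so $b=A\theta^*$), we have $\bar h(t) = -A\big(\bar\theta(t)-\theta^*\big)$, so the cross term equals $-2\alpha_t\,\big(\bar\theta(t)-\theta^*\big)^T A\,\big(\bar\theta(t)-\theta^*\big)$.

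Next I would evaluate this quadratic form. With $x = \bar\theta(t)-\theta^*$, $V = \Phi x = V_{\bar\theta(t)}-V_{\theta^*}$, and $A = \Phi^T D(I-\gamma P)\Phi$, we get $x^TAx = V^TD(I-\gamma P)V$. Expanding the Dirichlet seminorm and using $\pi^TP=\pi^T$ gives the exact identity $V^TDPV = \|V\|_D^2 - \|V\|_{\rm Dir}^2$ (the computation behind the Dirichlet-seminorm bounds of \cite{bhandari2018finite,ollivier2018approximate}), hence $x^TAx = (1-\gamma)\|V\|_D^2 + \gamma\|V\|_{\rm Dir}^2$. Thus the cross term contributes exactly the last line of \eqref{eq:recc_relation}, namely $-2\alpha_t\big[(1-\gamma)\|V_{\theta^*}-V_{\bar\theta(t)}\|_D^2 + \gamma\|V_{\theta^*}-V_{\bar\theta(t)}\|_{\rm Dir}^2\big]$.

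The crux is the bound $E\big[\|\bar g(t)\|_2^2\mid\Theta(t)\big] \le \tfrac{2\sigma^2}{N} + \tfrac8N\sum_{v\in\mathcal V}\|V_{\theta_v(t)}-V_{\theta^*}\|_D^2$. I would split each agent's update around $\theta^*$: with $\psi_v(t):=\phi(s_v(t))-\gamma\phi(s_v'(t))$ and $g_v^*(t):=\big(r_v(t)-\psi_v(t)^T\theta^*\big)\phi(s_v(t))$, one has $\delta_v(t)\phi(s_v(t)) = g_v^*(t) - \phi(s_v(t))\,\psi_v(t)^T\big(\theta_v(t)-\theta^*\big)$. Averaging over $v$ and applying $\|a+b\|_2^2\le 2\|a\|_2^2+2\|b\|_2^2$ splits the conditional second moment into two pieces. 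For the first, the $g_v^*(t)$ are i.i.d.\ across $v$, independent of $\Theta(t)$, and zero-mean because $\theta^*$ is the stationary TD fixed point, so $E\big[\|\tfrac1N\sum_v g_v^*(t)\|_2^2\mid\Theta(t)\big] = \tfrac1{N^2}\sum_v E\big[\|g_v^*(t)\|_2^2\big] \le \sigma^2/N$, using $\|\phi(s)\|_2\le1$ and the definition of $\sigma^2$ under $s_v(t)\sim\pi$ --- this is where averaging buys the factor $1/N$. For the second piece, Jensen's inequality $\|\tfrac1N\sum_v u_v\|_2^2\le\tfrac1N\sum_v\|u_v\|_2^2$ together with $\|\phi(s_v(t))\|_2\le1$ and $\big(\psi_v(t)^T(\theta_v(t)-\theta^*)\big)^2 = \big(V_v(s_v(t))-\gamma V_v(s_v'(t))\big)^2 \le 2V_v(s_v(t))^2 + 2V_v(s_v'(t))^2$, where $V_v := V_{\theta_v(t)}-V_{\theta^*}$, gives (after conditional expectation, using that $s_v(t)\sim\pi$ and hence $s_v'(t)\sim\pi$) a bound of $4\|V_{\theta_v(t)}-V_{\theta^*}\|_D^2$ per summand. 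Collecting the two pieces yields the claimed estimate; substituting the three bounds into the expansion and taking total expectation by the tower property gives \eqref{eq:recc_relation}.

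The main obstacle is this second-moment estimate: the naive bound $\|\bar g(t)\|_2^2 \le \tfrac1N\sum_v\|g_v(t)\|_2^2$ fails to produce a $1/N$ on $\sigma^2$, so one must keep the zero-mean, across-agent-independent term $g_v^*(t)$ separate from the $\theta_v(t)$-dependent remainder, and exploit both (i) that $g_v^*(t)$ is exactly mean-zero --- which uses $s_v(t)\sim\pi$ and the definition of $\theta^*$ --- and (ii) that the $g_v^*(t)$ are independent across agents --- which uses that the $N$ copies of the MDP are decoupled conditionally on $\Theta(t)$.
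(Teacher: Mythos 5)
Your proposal is correct and follows essentially the same route as the paper: the same expansion of $\|\bar\theta(t+1)-\theta^*\|_2^2$ with the martingale term killed by Proposition \ref{pro:bar_lc}(b), the same split of the second moment around $\theta^*$ via $\|\bm a^*-\bm b^*\|^2\le 2\|\bm a^*\|^2+2\|\bm b^*\|^2$ with independence across agents yielding the $\sigma^2/N$ term, and the same identity $x^TAx=(1-\gamma)\|V\|_D^2+\gamma\|V\|_{\rm Dir}^2$ for the cross term. The only difference is that you derive that identity and the variance bounds explicitly where the paper cites Corollary 1 of \cite{liu2021temporal} and states the bounds without detail.
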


%Compared with Lemma \ref{lem:1}, Lemma \ref{lem:2} get to divide $\hat{\sigma}^2$ by the number of agents $N$. This is because that in Algorithm \ref{algo:our}, all agent keeps their own states and are independent with each other, while in Algorithm \ref{algo:doan}, all agents share the same state.

\begin{proof}[Proof of Lemma \ref{lem:2}] {  We have
$\bar{\theta}(t+1) = \bar{\theta}({t}) + \alpha_t \left[\bar{h}(t) + \bar{m}(t)\right].$
By taking expectations:
\begin{small}
    \begin{align}
 & E\left[\left\|\bar{\theta}({t+1}) - \theta^*\right\|_2^2 \right] 
 =  E\left[ \left\|\bar{\theta}({t}) - \theta^*\right\|_2^2 \right] + \alpha_t^2E\left[ \left\|\bar{h}(t) + \bar{m}(t)\right\|_2^2 \right] 
  \notag \\ & - 2 \alpha_t E\left[ \left(\bar{h}(t)+ \bar{m}(t)\right)^T \left(\theta^* - \bar{\theta}({t})\right)\right] . \label{eq:exp_rec_lc}
\end{align}
\end{small} We first consider the second term on the right hand side of Eq. (\ref{eq:exp_rec_lc}). Following the definition of $\bar{h}(t)$ and $\bar{m}(t)$ and 
%following the proof of Lemma 5 in \cite{bhandari2018finite}
plugging in the expression for TD error $\delta_v(t)$ with Eq. (\ref{eq:delta_lc}), we obtain $E\left[ \left\| \bar{h}(t) + \bar{m}(t) \right\|_2^2 \right]
=E \left[\| a^* - b^* \|_2^2\right]$, where \begin{equation*}
    \bm{a}^* =\frac{1}{N} \sum_{v \in \mathcal{V}} \left[ {r_v}(t) - \left(\phi(s_v(t))- \gamma \phi(s_v'(t))\right)^T \theta^*\right]\phi(s_v(t)),
\end{equation*} 
\begin{equation*}
    \bm{b}^* = \frac{1}{N} \sum_{v \in \mathcal{V}} \phi(s_v(t))\left(\phi(s_v(t))- \gamma \phi(s_v'(t))\right)^T( \theta_v(t) -\theta^*).
\end{equation*} Using inequality $\|\bm{a}^*-\bm{b}^*\|^2 \leq 2\|\bm{a}^*\|^2 + 2\|\bm{b}^*\|^2$, we obtain
\begin{align}
 E\left[ \left\|\bar{h}(t) + \bar{m}(t)\right\|_2^2 \right] \leq & 2E\left[\|\bm{a}^*\|^2\right] + 2E\left[\|\bm{b}^*\|^2\right] \notag \\
 \leq &  \frac{2{\sigma}^2}{N} + \frac{8}{N} \sum_{v \in \mathcal{V}} E\left[ \|V_{\theta_v(t)} - V_{\theta^*} \|_{D}^2\right].\label{eq:h+m_lc}
\end{align} We next consider the third term on the right hand side of Eq. (\ref{eq:exp_rec_lc}):
\begin{align}
 & E\left[ \left[\bar{h}(t)+ \bar{m}(t)\right]^T (\theta^* - \bar{\theta}({t}))\right]
 = E\left[\bar{h}^T(t) (\theta^* - \bar{\theta}({t})) \right] \notag \\
 = & E\left[ (1-\gamma)\left\|V_{\theta^*} - V_{\bar{\theta}(t)} \right\|_{D}^2 + \gamma \left\|V_{\theta^*} - V_{\bar{\theta}(t)} \right\|_{{\rm Dir}}^2 \right].\label{eq:inner_product_lc}
\end{align}
Here we use that by Proposition \ref{pro:bar_lc} part (a), we have that %$\bar{h}(t)$ is a linear function of $\bar{\theta}(t)$, i.e.,
$\bar{h}(t) = {b} -A \bar{\theta}(t)$. Furthermore, if we let $ \bar{h}(\theta)$ denote the linear function ${b} -A {\theta}$, we have that $\bar{h}(\theta^*)=0$. Now applying Corollary 1 in \cite{liu2021temporal} proves the last equation. }

{  Combining equations (\ref{eq:exp_rec_lc}), (\ref{eq:h+m_lc}), and (\ref{eq:inner_product_lc}), we obtain Eq.(\ref{eq:recc_relation}) } 
\end{proof}

With this lemma in place, we are now ready to provide a proof of Theorem \ref{thm:main}. 

\begin{proof}[Proof of Theorem \ref{thm:main}]
Starting from Lemma \ref{lem:2} and Eq.(\ref{eq:recc_relation}), we first consider the bound for the term $\sum_{t=1}^T \sum_{v=1}^N E\left[ \|V_{\theta_v(t)} - V_{\theta^*} \|_{D}^2\right]$. We can plug in that $N=1$ into Lemma \ref{lem:2} to obtain the next inequality:
\begin{small}
    \begin{align*}
  & E\left[\left\|{\theta}_v({t+1}) - \theta^*\right\|_2^2 \right] \leq  E\left[ \left\|{\theta}_v({t}) - \theta^*\right\|_2^2 \right] \\ & +\alpha_t^2 \left(2{\sigma}^2 + 8 E\left[ \|V_{\theta_v(t)} - V_{\theta^*} \|_{D}^2\right]\right )\\
 &  - 2 \alpha_t E\left[ (1-\gamma) \left\|V_{\theta^*} - V_{{\theta}_v(t)} \right\|_{D}^2+ \gamma \left\|V_{\theta^*} - V_{{\theta}_v(t)} \right\|_{{\rm Dir}}^2 \right].
\end{align*}
\end{small}

If the sequence of step-sizes are non-increasing and satisfies $8\alpha_t^2-2\alpha_t(1-\gamma) \leq - \alpha_t(1-\gamma),$ then we obtain
\begin{align*}
 &  \alpha_t E\left[ (1-\gamma) \left\|V_{\theta^*} - V_{{\theta}_v(t)} \right\|_{D}^2+ 2 \gamma \left\|V_{\theta^*} - V_{{\theta}_v(t)} \right\|_{{\rm Dir}}^2 \right]
 \\ \leq &  E\left[ \left\|{\theta}_v({t}) - \theta^*\right\|_2^2 \right] -E\left[\left\|{\theta}_v({t+1}) - \theta^*\right\|_2^2 \right] + 2 \alpha_t^2 {\sigma}^2.
\end{align*}
Since $ E\big[ 2 \gamma \big\|V_{\theta^*} - V_{{\theta}_v(t)} \big\|_{{\rm Dir}}^2 \big]$ is non-negative, it now follows that
\begin{align*}
 & \alpha_t E\left[ (1-\gamma) \left\|V_{\theta^*} - V_{{\theta}_v(t)} \right\|_{D}^2 \right] \\
 \leq & E\left[ \left\|{\theta}_v({t}) - \theta^*\right\|_2^2 \right] -E\left[\left\|{\theta}_v({t+1}) - \theta^*\right\|_2^2 \right] + 2 \alpha_t^2 {\sigma}^2.
\end{align*}
Multiplying $\alpha_t$ on both sides and summing over $t$, we have
\begin{small}
    \begin{align*}
  & \sum_{t=0}^{T-1} \alpha_t^2 E\left[ (1-\gamma) \left\|V_{\theta^*} - V_{{\theta}_v(t)} \right\|_{D}^2 \right] \\
 = & \alpha_0 E\left[ \left\|{\theta}_v({0}) - \theta^*\right\|_2^2 \right] + \sum_{t=1}^{T-1}(\alpha_{t-1} - \alpha_{t}) E\left[ \left\|{\theta}_v({t}) - \theta^*\right\|_2^2 \right]\\ 
 & -\alpha_{T-1} E\left[\left\|{\theta}_v({T}) - \theta^*\right\|_2^2 \right] + 2 \sum_{t=0}^{T-1} \alpha_t^3 {\sigma}^2\\
 \leq &\alpha_0 E\left[ \left\|{\theta}_v({0}) - \theta^*\right\|_2^2 \right] + 2 \sum_{t=0}^{T-1} \alpha_t^3 {\sigma}^2,
\end{align*}
\end{small}
where the last inequality is because that $\{\alpha_t\}_t$ are non-increasing step-sizes.
Summing over agents $v$, we get \begin{small} 
\begin{align}
 & \sum_{v=1}^N \sum_{t=0}^{T-1} \alpha_t^2 E\left[ (1-\gamma) \left\|V_{\theta^*} - V_{{\theta}_v(t)} \right\|_{D}^2 \right]
 \leq  N \alpha_0 \hat{R}_0 + 2 N \sum_{t=0}^{T-1} \alpha_t^3 {\sigma}^2. \label{eq:sum_vt}
\end{align} \end{small} 
With this equation in place, we now turn to the proof of all the parts of the theorem.

\textbf{Proof of part (a):} We consider the constant step-size sequence $\alpha_0 = \cdots = \alpha_T \leq (1- \gamma)/8$. Then let $\alpha$ denote the constant step-size. Plugging into Eq. (\ref{eq:recc_relation}) and rearranging it, we get
\begin{align*}
 & 2 \alpha E\left[ (1-\gamma) \left\|V_{\theta^*} - V_{\bar{\theta}(t)} \right\|_{D}^2+ \gamma \left\|V_{\theta^*} - V_{\bar{\theta}(t)} \right\|_{{\rm Dir}}^2 \right] \\
 \leq &E\left[ \left\|\bar{\theta}({t}) - \theta^*\right\|_2^2 \right] - E\left[\left\|\bar{\theta}({t+1}) - \theta^*\right\|_2^2 \right]\\
 & +\alpha^2 \left(\frac{2{\sigma}^2}{N} + \frac{8}{N} \sum_{v \in \mathcal{V}} E\left[ \|V_{\theta_v(t)} - V_{\theta^*} \|_{D}^2\right]\right ).
\end{align*}
Summing over $t$ gives

\begin{small}
\begin{align*}
 & 2 \sum_{t=0}^{T-1} \alpha E\left[ (1-\gamma) \left\|V_{\theta^*} - V_{\bar{\theta}(t)} \right\|_{D}^2+ \gamma \left\|V_{\theta^*} - V_{\bar{\theta}(t)} \right\|_{{\rm Dir}}^2 \right]\\
 \leq &E\left[ \left\|\bar{\theta}({0}) - \theta^*\right\|_2^2 \right] - E\left[\left\|\bar{\theta}({T}) - \theta^*\right\|_2^2 \right] \\
 & +  \frac{2 T \alpha^2{\sigma}^2}{N} + \frac{8}{N} \sum_{t=0}^{T-1} \sum_{v \in \mathcal{V}} \alpha^2 E\left[ \|V_{\theta_v(t)} - V_{\theta^*} \|_{D}^2\right]\\
 \leq & E\left[ \left\|\bar{\theta}({0}) - \theta^*\right\|_2^2 \right] + \frac{2 T \alpha^2{\sigma}^2}{N} + \frac{8\alpha}{1-\gamma} \left(\hat{R}_0 + 2 T \alpha^2 {\sigma}^2 \right)
\end{align*}\end{small}where we used Eq. (\ref{eq:sum_vt}). 

Now dividing by $2 \alpha$ on both sides:

\begin{align*}
  & \sum_{t=0}^{T-1} E\left[ (1-\gamma) \left\|V_{\theta^*} - V_{\bar{\theta}(t)} \right\|_{D}^2+ \gamma \left\|V_{\theta^*} - V_{\bar{\theta}(t)} \right\|_{{\rm Dir}}^2 \right] \\
 \leq & \frac{1}{2\alpha}E\left[ \left\|\bar{\theta}({0}) - \theta^*\right\|_2^2 \right] + \frac{ T \alpha {\sigma}^2}{N} + \frac{4}{1-\gamma} \left(\hat{R}_0 + 2 T \alpha^2 {\sigma}^2 \right).
\end{align*}Let $\hat{\theta}(T)=\frac{1}{T} \sum_{t=1}^T \bar{\theta}(t) $. Then, by convexity
\begin{align*}
  & E\left[(1-\gamma) \left\|V_{\theta^*} - V_{\hat{\theta}(T)} \right\|_{D}^2 + \gamma \left\|V_{\theta^*} - V_{\hat{\theta}(T)} \right\|_{{\rm Dir}}^2 \right] \\
 \leq & \frac{1}{T} \sum_{t=1}^T E\left[ (1-\gamma) \left\|V_{\theta^*} - V_{\bar{\theta}(t)} \right\|_{D}^2 + \gamma \left\|V_{\theta^*} - V_{\bar{\theta}(t)} \right\|_{{\rm Dir}}^2 \right]\\
 \leq & \frac{1}{ T } \left(\frac{1}{2\alpha} E\left[ \left\|\bar{\theta}({0}) - \theta^*\right\|_2^2 \right] +\frac{4\hat{R}_0}{1-\gamma} \right)+ \frac{ \alpha {\sigma}^2}{N} + \frac{8 \alpha^2 {\sigma}^2}{1-\gamma},
\end{align*} which is what we wanted to show. 

\textbf{Proof of part (b):} We now consider the step-size $\alpha_0 = \cdots = \alpha_T =\frac{1}{\sqrt{T}}$. When $T \geq \frac{64}{ (1-\gamma)^2}$, it can be observed that $\alpha = \frac{1}{\sqrt{T}} \leq \frac{1-\gamma}{8}$. As a consequence of part (a), it is immediate that, 
\begin{align*}
 & E\left[(1-\gamma) \left\|V_{\theta^*} - V_{\hat{\theta}(T)} \right\|_{D}^2 + \gamma \left\|V_{\theta^*} - V_{\hat{\theta}(T)} \right\|_{{\rm Dir}}^2 \right] \\
 \leq & \frac{1}{2 \sqrt{T} }\left( E\left[ \left\|\bar{\theta}({0}) - \theta^*\right\|_2^2 \right]+\frac{ 2 {\sigma}^2}{N} \right) +\frac{1}{T} \left(\frac{4\hat{R}_0+ 8{\sigma}^2 }{1-\gamma} \right),
\end{align*} which is what we wanted to show.

\textbf{Proof of part (c):} Using that $\gamma \left\|V_{\theta^*} - V_{\bar{\theta}(t)} \right\|_{{\rm Dir}}^2$ is non-negative and rearranging Eq. (\ref{eq:recc_relation}), we have
\begin{small}
    \begin{align*}
 & E \left[ \left\|\bar{\theta}({t+1}) - \theta^*\right\|_2^2 \right]  \leq   \alpha_t^2 \left(\frac{2{\sigma}^2}{N} + \frac{8}{N} \sum_{v \in \mathcal{V}} E\left[ \|V_{\theta_v(t)} - V_{\theta^*} \|_{D}^2\right]\right )
  \\ & +E\left[ \left\|\bar{\theta}({t}) - \theta^*\right\|_2^2 \right]  - 2 \alpha_t (1-\gamma) E \left\|V_{\theta^*} - V_{\bar{\theta}(t)} \right\|_D^2
   .
\end{align*}
\end{small}

Applying Lemma 1 in \cite{bhandari2018finite}, which states that $\sqrt{\omega} \|\theta\|_2 \leq \|V_{\theta}\|_{D} \leq \|\theta\|_2 ,$ we get
\begin{align}
 & E \left[ \left\|\bar{\theta}({t+1}) - \theta^*\right\|_2^2 \right] \leq ( 1-2 \alpha_t (1-\gamma) \omega) E\left[ \left\|\bar{\theta}({t}) - \theta^*\right\|_2^2 \right] \notag \\
 & + \alpha_t^2 \left(\frac{2{\sigma}^2}{N} + \frac{8}{N} \sum_{v \in \mathcal{V}} E\left[ \|V_{\theta_v(t)} - V_{\theta^*} \|_{D}^2\right]\right) \label{eq:1/t}.
\end{align}

We first consider the last term on the right hand side, i.e., $E\left[ \|V_{\theta_v(t)} - V_{\theta^*} \|_{D}^2\right]$. Since each agent in the system executes the classical TD($0$) at time $t$ for $t \in \mathbb{N}_0$, then by part (c) of Theorem 2 and Lemma 1 in \cite{bhandari2018finite}, for $v \in \mathcal{V}$, we have that $E\left[ \|V_{\theta_v(t)} - V_{\theta^*} \|_{D}^2\right] 
 \leq E\left[ \|{\theta_v(t)} - {\theta^*} \|_{2}^2\right] \leq \frac{\hat{\zeta}}{t + \tau},$
where $\hat{\zeta} = \max\left\{ {2 \alpha^2 {\sigma}^2}, \tau \hat{R}_0 \right\}.$  Hence, $\frac{8}{N} \sum_{v \in \mathcal{V}} E\left[ \|V_{\theta_v(t)} - V_{\theta^*} \|_{D}^2\right] \leq \frac{8 \hat{\zeta}}{t + \tau},$ and plugging it into Eq. (\ref{eq:1/t}), we can obtain

\begin{align*}
 & E \left[ \left\|\bar{\theta}({t+1}) - \theta^*\right\|_2^2 \right] \\
 \leq & ( 1-2 \alpha_t (1-\gamma) \omega) E\left[ \left\|\bar{\theta}({t}) - \theta^*\right\|_2^2 \right] + \alpha_t^2 \left(\frac{2{\sigma}^2}{N} + \frac{8 \hat{\zeta}}{t + \tau} \right) \notag\\
 = & \left( 1 - \frac{4}{t+\tau}\right)E\left[ \left\|\bar{\theta}({t}) - \theta^*\right\|_2^2 \right] + \frac{2\alpha ^2 {\sigma}^2/N}{(t+\tau)^2} + \frac{8 \alpha ^2 \hat{\zeta}}{(t+\tau)^3}, 
\end{align*}where we use that $\alpha_t = \frac{\alpha}{t+\tau}$ with $\alpha = \frac{2}{(1-\gamma)\omega}$ and $\tau = \frac{16}{(1-\gamma)^2\omega}$ to get the last line. This recursion now immediately implies part(c) of the theorem using the standard estimate 
\begin{align}
  & \prod_{i=0}^t \left( 1- \frac{4}{t+\tau-i}\right) 
 %=  \frac{t+\tau-4}{t+\tau} \cdot \frac{t+\tau-5}{t+\tau-1} \cdot \cdots \cdot \frac{\tau-4}{\tau}\notag \\
 %=& \frac{(\tau-1)(\tau-2)(\tau-3)(\tau-4)}{(t+\tau)(t+\tau-1)(t+\tau-2)(t+\tau-3)} \notag \\
 %=& \frac{\tau-1}{t+\tau} \cdot \frac{\tau-2}{t+\tau-1} \cdot \frac{\tau-3}{t+\tau-2} %\cdot\frac{\tau-4}{t+\tau-3} 
 < \left( \frac{\tau-1}{t+\tau} \right)^4. \label{eq:prod}
\end{align}
\end{proof}

{{ \section{Numerical Experiments}

In this section, we perform some experiments comparing Algorithm \ref{algo:our} with earlier distributed TD methods from \cite{doan2019finite}, \cite{sun2019finite} and \cite{wang2020decentralized} in terms of TD error. Note that the distributed TD methods of \cite{doan2019finite} and \cite{sun2019finite} are the same except that \cite{doan2019finite} has an additional projection step. We consider the case of constant step-size, which is the most widely used in practice, taking  $N = 100$ agents. The communication graph among agents is generated by the Erdos–Renyi model, which is connected. We consider two simple examples: Gridworld (see Chapter 3 of \cite{sutton2018reinforcement}) and  MountainCar-v1 from OpenAI Gym; for the latter, we use the tile coding \cite{sutton2018reinforcement} to discretize continuous state spaces into overlapping tiles. We use 5 tilings, and each tiling has $7 \times 7$ grids. 

{\em Recall that our method only uses one run of average consensus at the end, whereas the other methods require a communication at every step}. The graphs for our method show the TD error at each iteration if we stopped the method and run the average consensus to average the estimates across the network.  Figure \ref{fig:compare_six} shows that the TD errors of Algorithm \ref{algo:our} perform essentially identically to the other methods in spite of the reduced communication. 
%This is particularly surprising, as our method uses either one round of communication in the server model, or $O(\log T)$ rounds of communication in the fully distributed setting, and the other methods communicat at every steps so they perform $T$ communications. 

\begin{figure}[h]
\centering  
\subfigure[Grid World]{
\label{Fig.sub.1'}
\includegraphics[width=0.2\textwidth]{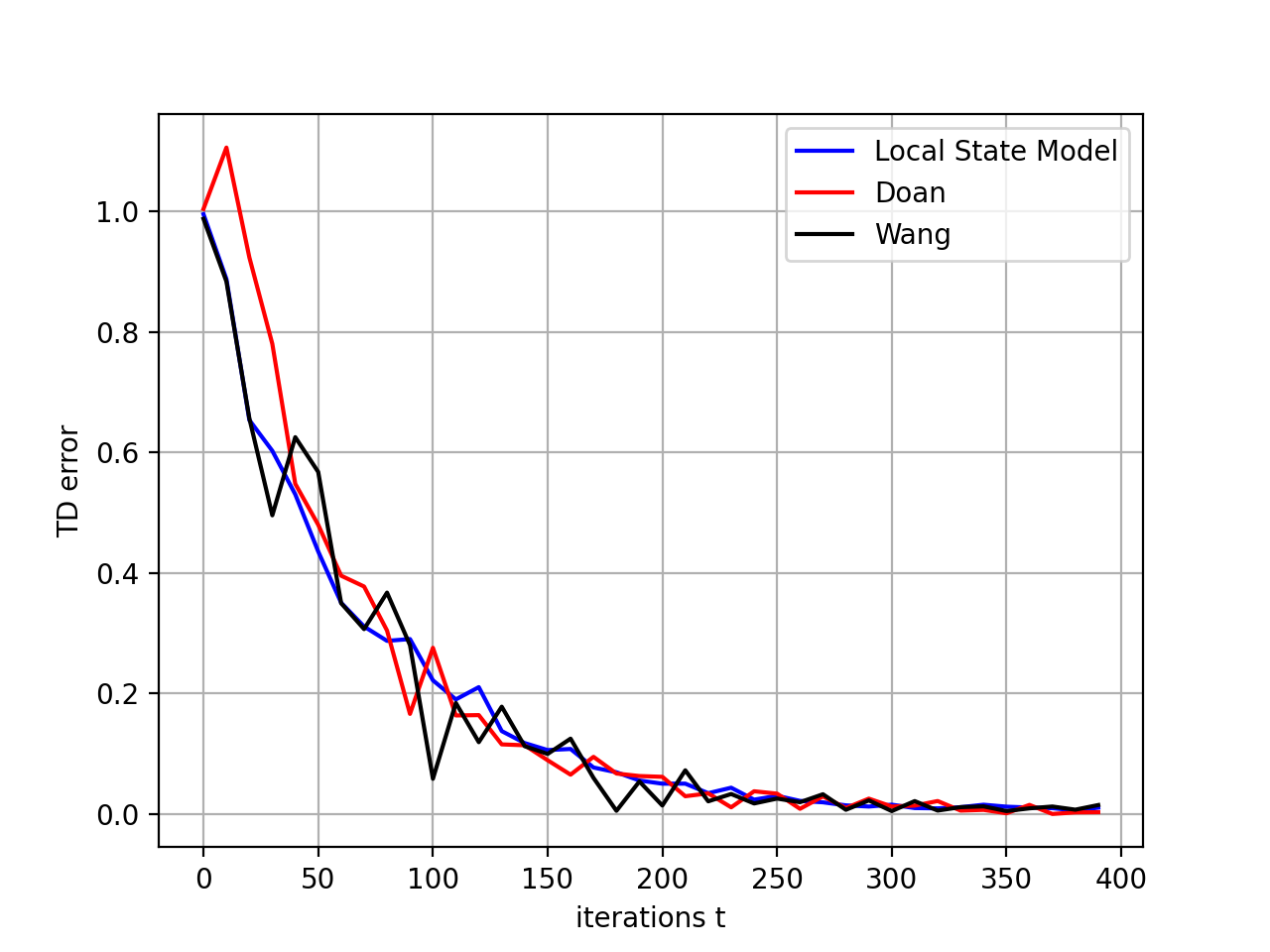}}
% \subfigure[Acrobot-v1]{
% \label{Fig.sub.2'}
% \includegraphics[width=0.2\textwidth]{}}
% \subfigure[CartPole-v1]{
% \label{Fig.sub.3'}
% \includegraphics[width=0.2\textwidth]{}}
\subfigure[MountainCar-v1]{
\label{Fig.sub.4'}
\includegraphics[width=0.2\textwidth]{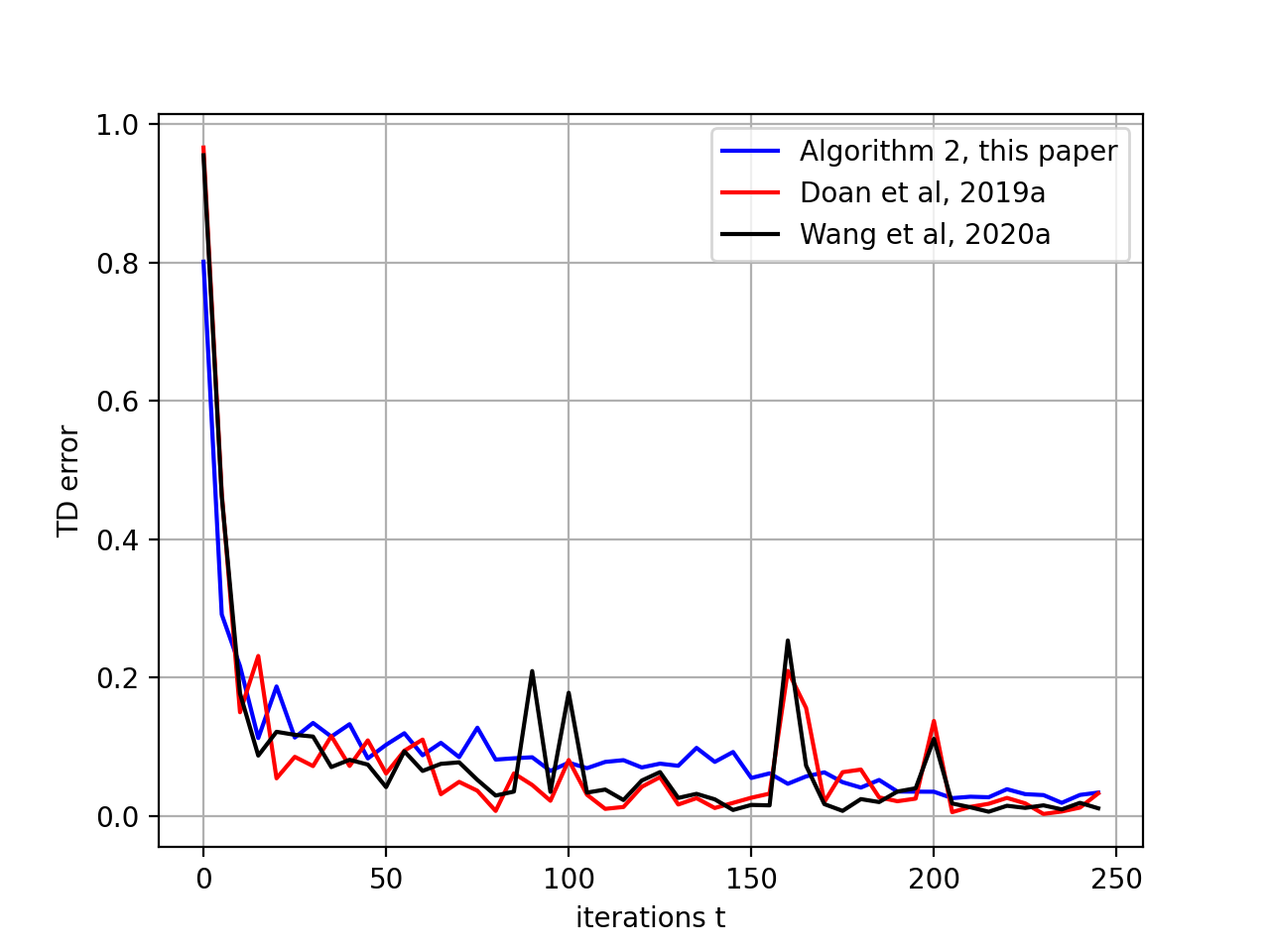}}
% \subfigure[MountainCarCont-v0]{
% \label{Fig.sub.5'}
% \includegraphics[width=0.2\textwidth]{}}
% \subfigure[Pendulum-v1]{
% \label{Fig.sub.6'}
% \includegraphics[width=0.2\textwidth]{}}
\caption{Comparison of our method to the previous literature for a policy that takes uniformly random actions. }
\label{fig:compare_six}
\end{figure}

\section{Conclusion}
We have presented convergence results for  distributed TD($0$) with linear function approximation. Our results are unique in terms of utilizing almost no communication: only one run of average consensus is needed. In particular, this means we need to do $O( \log T)$ average consensus steps for $T$ steps of TD(0) at every node of the network. The convergence bounds we derive reduce the variance by a factor of $N$ when the nodes generate their samples independently. %The most immediate direction for future research would be to generalize the results presented here to TD($\lambda$). More broadly,
The main open question left by this work is whether it is possible to extend these results to other methods popular in the reinforcement learning, such as Q-learning.} It would also be of interest to to apply these results to policies in the context of control of epidemics using the problem formulation in \cite{ma2020optimal}. %In recent years, Q-learning with non-linear approximation has been the method of choice to obtain state-of-the-art performance, and it is not clear whether its possible to replicate these results for Q-learning (even without function approximation). }

\bibliography{ref}
\bibliographystyle{plain}

\end{document}